 \newcommand{\OMIT}[1]{} %
\newcommand\qedblob{\ding{113}}
\def\literalqed{{\ \nolinebreak\hfill\mbox{\qedblob\quad}}}
\theoremstyle{plain}% default
\newtheorem{proposition}[theorem]{Proposition}
\theoremstyle{definition}
\newtheorem{definition}[theorem]{Definition}
\newtheorem{example}[theorem]{Example}
\theoremstyle{remark}
\definecolor{verydarkgrey}{gray}{0} % {0.2}
\definecolor{grey}{gray}{0} % {0.6}
\DeclareMathAlphabet{\mathpzc}{OT1}{pzc}{m}{it}
\newcommand{\cala}{\ensuremath{\mathcal{A}}}
\newcommand{\calr}{\ensuremath{\mathcal{R}}}
\newcommand{\safeq}{\ensuremath{AF =\langle\cala,\calr\rangle}}
\newcommand{\Af}{\ensuremath{\langle\cala,\calr\rangle}}
\newcommand{\AF}{\ensuremath{AF}}
\newcommand{\entailsZ}{\,{|\hspace{-0.49em}\sim}_{\text{Z}}\,}	% System Z inference relation
\begin{document}

\title{Towards Ranking-based Semantics for Abstract Argumentation using Conditional Logic Semantics}

\author{Kenneth Skiba\institute{Institute for Web Science and Technologies, University of Koblenz-Landau, Germany, email: \{kennethskiba, thimm\}@uni-koblenz.de} \and Matthias Thimm$^1$}
%\institute{Institute for Web Science and Technologies, University of Koblenz-Landau, Germany} }
%\author{anonymous}

\maketitle
\bibliographystyle{ecai}
\begin{abstract}
%\textbf{Introduce your abstract. No more than 250 words.}
%\textcolor{red}{TODO: Referenzen verbessern}
We propose a novel ranking-based semantics for Dung-style argumentation frameworks with the help of conditional logics. Using an intuitive translation for an argumentation framework to generate conditionals, we can apply nonmonotonic inference systems to generate a ranking on possible worlds. With this ranking we construct a ranking for our arguments. With a small extension to this ranking-based semantics we already satisfy some desirable properties for a ranking over arguments. 
\end{abstract}

\section{Introduction}
Formal argumentation \cite{Atkinson:2017} describes a family of approaches to modeling rational decision-making through the representation of arguments and their relationships. A particular important representative approach is that of abstract argumentation \cite{dun:j:argument-acceptability}, which focuses on the representation of arguments and a conflict relation between arguments through modeling this setting as a directed graph. Here, arguments are identified by vertices and an \emph{attack} from one argument to another is represented as a directed edge. This simple model already provides an interesting object of study, see  \cite{Baroni:2018} for an overview. Reasoning is usually performed in abstract argumentation by considering \emph{extensions}, i.\,e., sets of arguments that are jointly acceptable given some formal account of ``acceptability''. Therefore, this classical approach differentiates between ``acceptable'' arguments and ``rejected'' arguments. However, empirical cognitive studies such as the ones described in \cite{Rahwan:2010a,Polberg:2018a} have shown that humans assess arguments in a more fine-grained manner. For example, while \cite{Rahwan:2010a} provides evidence that while humans adopt the ``reinstatement principle'' (which states that arguments defended by accepted arguments should also be accepted), they usually assign lower confidence to reinstated arguments than non-attacked ones. Similarly, the experiments described in \cite{Polberg:2018a} advocate that a probabilistic interpretation of arguments \cite{Hunter:2020} is more suitable than the classical two-valued interpretation.%Cramer:2019

In order to formally address the observations described above, ranking-based semantics \cite{amg-ben:c:ranking-based-semantics} provide a fine-grained assessment of arguments. Here, we follow this line of work and make some first steps towards the use of conditional logic and the System Z inference mechanism \cite{goldszmidt1996qualitative} to define rankings between arguments. Conditional logic is a general non-monotonic representation formalism that focuses on default rules of the form ``if A then B'' and there exist some interesting relationships between this formalism and that of formal argumentation \cite{ker-thim:2018:cl-towards-adf,Heyninck2020BetweenADF}. We make use of these relationships here for the purpose of defining a novel ranking-based semantics for abstract argumentation.

The rest of this work is organized as follows: In Section 2 all necessary preliminaries will be stated. Then we discuss our ranking idea in Section 3 and with Section 4 we conclude this paper. 
\section{Background}
%
%\textbf{Describe the method you use, have used, or plan to use in your research.}
%- AFs
%- Labeling semantic (TRUE; FALSE)
%- nonmonotonic Logic
%- System Z
%
In the following, we want to briefly recall some general preliminaries on argumentation frameworks and conditional logics.

\subsection{Abstract Argumentation Frameworks}

In this work, we use \emph{argumentation frameworks} first introduced in \cite{dun:j:argument-acceptability}. 
An \emph{argumentation framework} $\AF$ is a pair $\Af$, where $\cala$ is a finite set of arguments and $\calr$ is a set of attacks between arguments with $\calr \subseteq \cala \times \cala$.
An argument $a$ is said to \emph{attack $b$} if $(a,b) \in \calr$.
We call an argument $a$ \emph{acceptable with respect to a set $S \subseteq \cala$} if for each attacker $b \in \cala$ of this argument $a$ with $(b,a) \in \calr$, there is an argument $c \in S$ which attacks $b$, i.\,e., $(c,b) \in \calr$; we then say that $a$ is \emph{defended by~$c$}.
An argumentation framework $\Af$ can be illustrated by a directed graph with vertex set $\cala$ and edge set~$\calr$.
% $G = (V,E)$, where the arguments $\cala$ correspond to the vertices $V$ and the attacks are represented by the directed edges $E$. %%% KS-2019-11-15: Unnötig

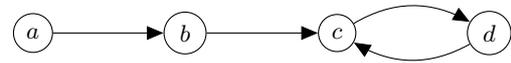
\begin{figure}[t] %%% Figures und Tables per [t] an den Seitenanfang stellen
\center
\begin{tikzpicture}
\node (a1) at (0,0) [circle, draw] {$a$};
\node (a2) at (2,0) [circle, draw] {$b$};
\node (a3) at (4,0) [circle, draw] {$c$};
\node (a4) at (6,0) [circle, draw] {$d$};

\path[-triangle 45] (a1) edge  (a2);
\path[-triangle 45] (a2) edge  (a3);
\path[-triangle 45, bend left] (a3) edge  (a4);
\path[-triangle 45, bend left] (a4) edge (a3);
\end{tikzpicture}
\caption{Argumentation framework from Example~\ref{exampleAF}.}
\label{Tikz:exampleAF}
\end{figure}

\begin{example}\label{exampleAF}
Let $\safeq$ with $\cala = \{a,b,c,d\}$
and $\calr =\{(a,b),(b,c),(c,d),(d,c)\}$ be an argumentation framework.
The corresponding graph is shown in Figure~\ref{Tikz:exampleAF}.
Argument $b$ is not acceptable with respect to any set $S$ of arguments, as $b$ is not defended against $a$'s attack.  On the other hand, $c$ is acceptable with respect to $S= \{a,c\}$, as $a$ defends $c$ against $b$'s attack and $c$ defends itself against $d$'s attack.
\end{example}

Up to this point the arguments can only have the two statuses of accepted or not accepted\footnote{However, using labeling-based semantics we can generate a three-valued model \cite{wu-cam-pod:j:labelling-justification}.}, but we want to have a more fine-graded comparison between arguments. For this we use the idea of ranking-based semantics \cite{amg-ben:c:ranking-based-semantics,del:dis:ranking}.
\begin{definition}[Ranking-based semantics]
A \emph{ranking-based semantics} $\sigma$ associates to any argumentation framework \safeq\ a preorder $\succeq_{AF}^{\sigma}$ on \cala. $a \succeq_{AF}^{\sigma} b$ means that $a$ is at least as acceptable as $b$. With $a \simeq^{\sigma}_{AF} b$ we describe that $a$ and $b$ are equally acceptable, i.\,e., $a \succeq_{AF}^{\sigma} b$ and $b \succeq_{AF}^{\sigma} a$. Finally we say $a$ is strictly more acceptable than $b$, denoted by $a \succ_{AF}^{\sigma} b$, if $a \succeq_{AF}^{\sigma} b$ and not $b \succeq_{AF}^{\sigma} a$.
%\begin{itemize}\textcolor{red}{(TODO:In fließtext?)}
%\item $a \succeq_{AF}^{\sigma} b$ means that $a$ is at least as acceptable as $b$;
%\item $a \simeq^{\sigma}_{AF} b$ means that $a$ and $b$ are equally acceptable (short for $a \succeq_{AF}^{\sigma} b$ and $b \succeq_{AF}^{\sigma} a$);
%\item $a \succ_{AF}^{\sigma} b$ means that $a$ is strictly more acceptable than $b$;
%\item $a \not\succeq_{AF}^{\sigma} b$ and $b \not\succeq_{AF}^{\sigma} a$ means $a$ and $b$ are incomparable.
%\end{itemize}
We denote by $\sigma(AF)$ the ranking on \cala\ returned by $\sigma$.
\end{definition}

\subsection{Conditional Logics}
%\todoKS{elaborate more}
We use a set of atoms $A$ and connectives $\land$ (and), $\lor$ (or), and $\neg$ (negation) to generate the \emph{propositional language} $\mathcal{L}(A)$. $w$ is an \emph{interpretation} (or \emph{possible world}) for $\mathcal{L}(A)$ when $w: A \rightarrow \{\textsc{true},\textsc{false}\}$. We denote the set of all interpretations as $\Omega(A)$. An interpretation $w$ \emph{satisfies} an atom $a \in A$ ($w \vdash a$), if and only if $w(a) = \textsc{true}$. The relation $\vdash$ is extended to arbitrary formulas in the usual way. We will abbreviate an interpretation $w$ with its \emph{complete conjunction}, i.\,e., if $a_{1}, \dots, a_n \in A$ are the atoms that are assigned $\textsc{true}$ by $w$ and $a_{n+1}, \dots, a_m \in A$ are the ones assigned with $\textsc{false}$, $w$ will be identified with $a_1 \dots a_n \overline{a_{n+1}} \dots \overline{a_{m}}$. For $\Phi \subseteq \mathcal{L}\{A\}$ we define $w \vdash \Phi$ if and only if $w \vdash \phi$ for every $\phi \in \Phi$. With $Mod(X) = \{w \in \Omega(A) | w \vdash X\}$ we define the set of models for a set of formulas $X$. A \emph{conditional} is a structure of the form $(\varphi | \phi)$ and represents a rule ``If $\phi$ than (usually) $\phi$''. 

We can consider conditionals as \emph{generalized indicator functions} \cite{finetti1974theory} for possible worlds $w$ as follows: 
\begin{align}
((\varphi | \phi ))(w) = \begin{cases} 
1 : w \vdash \phi \land \varphi \\
0 : w \vdash \phi \land \neg \varphi \\
u : w \vdash \neg \phi
\end{cases}
\end{align}
where $u$ stands for \emph{unknown}.
Informally speaking, a world $w$ \emph{verifies} a conditional $(\varphi| \phi)$ iff it satisfies both antecedent and conclusion $((\varphi | \phi )(w)=1)$; it \emph{falsifies} iff is satisfies the antecedence but not the conclusion $((\varphi | \phi )(w) = 0)$; otherwise the conditional is \emph{not applicable} $((\varphi | \phi )(w) = u)$. A conditional $(\varphi|\phi)$ is satisfied by $w$ if it does not falsify it.
%We can interpret this conditionals as a three-valued model and therefore be compatible with the probabilistic interpretation of conditionals as conditional probabilities.
%So a conditional $(\varphi | \phi )$ can be accepted as plausible if its verification $ \varphi \land \phi$ is more plausible than its falsification $\phi \land \neg \varphi$. 

Semantics are given to sets of conditionals via ranking functions \cite{goldszmidt1996qualitative,Spohn:1988}.
With a ranking function, also called \emph{ordinal conditional function (OCF)}, $\kappa: \Omega(A) \rightarrow \mathbb{N} \cup \{\infty\}$ we can express the degree of plausibility of possible worlds $\kappa (\phi) := min\{\kappa(w) | w \vdash \phi\}$. With the help of OCFs $\kappa$ we can express the acceptance of conditionals and nonmonotonic inferences, so $(\varphi| \phi)$ is accepted by $\kappa$ iff $\kappa(\phi \land \varphi) < \kappa(\phi \land \neg \varphi)$. With $Bel(\kappa)= \{\phi | \forall w \in \kappa^{-1}(0): w \vdash \phi\}$ we denote the most plausible worlds.

As there are an infinite number of ranking functions that accept a given set of conditionals, we consider System Z \cite{goldszmidt1996qualitative} as an inference relation, which yields us a uniquely defined ranking function for reasoning.
\begin{definition}[System Z]
$(\varphi|\phi)$ is tolerated by a finite set of conditionals $\Delta$ if there is a possible world $w$ with $(\phi|\varphi)(w) = 1$ and $(\phi'|\varphi')(w) \neq 0$ for all $(\phi'|\varphi') \in \Delta$. 
The \emph{Z-partition} $(\Delta_0, \dots, \Delta_n)$ of $\Delta$ is defined as:
\begin{itemize}
\item $\Delta_0 = \{\delta \in \Delta | \Delta~\text{tolerates}~\delta\}$
\item $\Delta_1, \dots, \Delta_n$ is the Z-partition of $\Delta \setminus \Delta_0$
\end{itemize}
For $\delta \in \Delta$: $Z_{\Delta}(\delta)= i$ iff $\delta \in \Delta_{i}$ and $\Delta_1, \dots, \Delta_n$ is the Z-partitioning of $\Delta$. 

We define a ranking function $\kappa^Z_{\Delta}: \Omega \rightarrow \mathbb{N} \cup \{\infty\}$ as $\kappa^Z_{\Delta}(w)= max\{Z(\delta)| \delta(w) = 0, \delta \in \Delta \}+ 1$, with $max ~\emptyset = -1$.
Finally $\Delta \entailsZ \phi$ if and only if $\phi \in Bel(\kappa^Z_{\Delta})$.%, with $Bel(\kappa^Z_{\Delta})= \{\phi | \forall w \in \kappa^{-1}(0): w \vdash \phi\}$.
\end{definition}

\begin{example}\label{ex:cl}
Let $\Delta = \{(a|\neg b), (b|\neg a), (c|\neg b \land \neg a \land \neg d), (d|\top), (c| \neg d)\}$. For this set of conditionals, $\Delta = \Delta_0 \cup \Delta_1$ with $\Delta_0 = \{(a|\neg b), (b|\neg a), (c|\neg b \land \neg a \land \neg d)\}$ and $\Delta_1 = \{(\neg a \land \neg b|~ d)\}$ therefore we have the values from Table \ref{tab:ex:cl}.
\noindent
\begin{table*}
%\resizebox{\columnwidth}{!}{
\caption{Values for Example \ref{ex:cl}}\label{tab:ex:cl}
\centering
\begin{tabular}{l||c|c|c|c||c}
$\omega$ & $Z((a|\neg b))$& $Z((b|\neg a))$ & $Z((c|\neg b \land \neg a \land \neg d))$& $ Z((d|\top))$ &$Z((\neg a \land \neg b| d))$ \\ \hline
$abcd$ & u & u & u & 1 & 0\\
$abc\overline{d}$ & u & u & u & 0 & u \\
$ab\overline{c}d$ & u & u & u & 1 & 0 \\
$ab\overline{c}\overline{d}$ & u & u & u & 0 & u  \\
$a\overline{b}cd$ & 1 & u & u & 1 & 0\\
$a\overline{b}c\overline{d}$ & 1 & u & u & 0 & u\\
$a\overline{b}\overline{c}d$ & 1 & u & u & 1 & 0\\
$a\overline{b}\overline{c}\overline{d}$ & 1 & u & u &0 & u  \\
$\overline{a}bcd$ & u & 1 & u & 1 & 0 \\
$\overline{a}bc\overline{d}$ & u & 1 & u & 0 & u    \\
$\overline{a}b\overline{c}d$ & u & 1 & u & 1 & 0  \\
$\overline{a}b\overline{c}\overline{d}$ & u & 1 & u & 0 & u \\
$\overline{a}\overline{b}cd$ & 0 & 0 & u & 1 & 1\\
$\overline{a}\overline{b}c\overline{d}$ & 0 & 0 & 1 & 0 & u \\
$\overline{a}\overline{b}\overline{c}d$ & 0 & 0 & u & 1 & 1\\
$\overline{a}\overline{b}\overline{c}\overline{d}$ & 0 & 0 & 0 & 0 & u
\end{tabular}
%}
%\caption{Values for Example \ref{ex:cl}}
\end{table*}
So we can derive $(\kappa^Z_{\Delta_0})^{-1}(0) = \{abcd, ab\bar{c}d, a\bar{b}cd, a\bar{b}\bar{c}d,\bar{a}bcd, \bar{a}b\bar{c}d\}$ and $(\kappa^Z_{\Delta_1})^{-1}(0) = \emptyset$.
\end{example}

%====================================================================================================
\section{Ranking-based Semantics with Conditional Logic Semantics}
%====================================================================================================
%\textbf{Discuss the work you have done using the introduced method making a clear distinction between the work that you have already done using this method and the work that you plan to do in the future using this method. Discuss the results of your method according to the literature. Use the \textit{references.bib} file of our template to organise your BibTex references and to cite them in here. References should appear like this:} \cite{bench2007argumentation}. 
%Reverence other ranking semantics by other persons (siehe DIS), not all of them.
%
%Why is this idea relevant? -> logic idea, name problems of the other models.
In this work we want to extend previous works \cite{Heyninck2020BetweenADF,ker-thim:2018:cl-towards-adf} to not only combine abstract argumentation and conditional logics, but also present ideas to rank arguments using this combination.

The general idea is to represent an abstract argumentation framework as a set of conditionals, using System Z in order to determine a ranking function that accepts these conditionals, and then extract rankings on arguments from this ranking function. First, we need a translation from an argumentation framework to a set of conditionals. It is clear, that for an argument to be acceptable every attacker has to be not acceptable. With this idea we can construct the conditional logic knowledge base. Let $AF$ be an argumentation framework and $\theta: \mathcal{A} \rightarrow \mathcal{C}_{\mathcal{A}}$, where $\mathcal{C}_{\mathcal{A}}$ is the set of conditional knowledge bases over the propositional language generated by $\mathcal{A}$.
\begin{align} \label{func:theta}
\theta(AF) & = \{(a|B) \mid a \in \mathcal{A}, B = \bigwedge_{(b,a)\in \mathcal{R}}  \neg b \}
\end{align}
In other words, $\theta$ models that an argument is accepted if all its attackers are not accepted.

We can use inference systems like System Z on these conditional knowledge bases to generate a ranking over the possible worlds. Based on this ranking we want to rank the arguments. Our first idea is to count the number of occurrences of a positive literal in the set of worlds $(\kappa^Z_{\Delta})^{-1}(0)$ and then rank the corresponding arguments based on this number. So if an argument $a$ has a higher count then an argument $b$, we say $a \succeq b$. This simple idea yields a clear and uniquely defined ranking, while not needing a complex algorithm to be computed.  
%\todoKS{add motivation for this choice\\ Problem: $\kappa$ has $2^x$ elements}
\begin{definition}
Let \safeq\ be an argumentation framework translated with help of $\theta(AF)$ and an inference system to the set of worlds $\kappa^Z_{\Delta}$. Define
\begin{align}
Ccs_{AF}^{\theta}(a)= |\{w \in (\kappa^Z_{\theta(AF)})^{-1}(0) | w \vdash a\}|
\end{align}
\end{definition}

We can then use this counting function for our ranking-based semantics.
\begin{definition}[Conditional-counting-based semantics]

The \emph{Conditional-counting-based semantics (Ccbs)} associates to any argumentation framework \safeq\ a ranking $\succeq_{AF}^{Ccbs}$ on $\mathcal{A}$ such that $\forall a,b \in \mathcal{A}$ with respect to a translation $\theta$ and a ranking function $\kappa^Z_{\Delta}(\omega)$.
\[
a \succeq_{AF}^{Ccbs} b~ \text{if and only if} ~Ccs_{AF}^{\theta}(a) \geq Ccs_{AF}^{\theta}(b)
\]
\end{definition}

\begin{example}\label{ex:Ccbs}
Let $\safeq$ with $\cala = \{a,b,c,d\}$
and $\calr =\{(a,b),(b,a),(a,c),(b,c), (d,c)\}$ be an argumentation framework.
The corresponding graph can be found in Figure \ref{Tikz:exampleCcbs}. Using Equation \ref{func:theta} we obtain $\Delta = \{(a|\neg b), (b|\neg a), (c|\neg b \land \neg a \land \neg d), (d|\top)\}$. With $\Delta = \Delta_0$ we have $(\kappa^Z_{\Delta})^{-1}(0) = \{abcd, ab\bar{c}d, a\bar{b}cd, a\bar{b}\bar{c}d,\bar{a}bcd, \bar{a}b\bar{c}d\}$. Now we can count the number of occurrences of each argument. So $Ccs_{\kappa^Z_{\Delta}(\omega)}^{\theta}(a) = 4$, $Ccs_{\kappa^Z_{\Delta}(\omega)}^{\theta}(b) = 4$, $Ccs_{\kappa^Z_{\Delta}(\omega)}^{\theta}(c) = 3$ and $Ccs_{\kappa^Z_{\Delta}(\omega)}^{\theta}(d) = 6$. This results in $d \succeq^{Ccbs} a \simeq^{Ccbs} b \succeq^{Ccbs} c$. 

Looking at the graph we see, that argument $d$ is unattacked, so it is intuitive that this argument is ranked at the highest position. Also the arguments $a$ and $b$ are attacking each other and are not attacked by any other argument. These two arguments are there indistinguishable and should be ranked on the same level, but both arguments have at least one attacker so it should be ranking lower then $d$. Argument $c$ is attacked by three other arguments and defended by none, hence this argument should be ranked lower then its attackers.%\todoKS{explain that this result actually makes sense for the given graph.}
 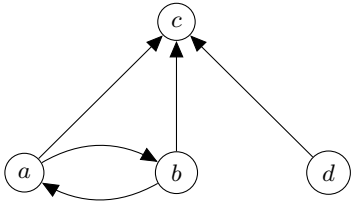
\begin{figure}[t] %%% Figures und Tables per [t] an den Seitenanfang stellen
\center
\begin{tikzpicture}
\node (a) at (0,0) [circle, draw] {$a$};
\node (b) at (2,0) [circle, draw] {$b$};
\node (c) at (2,2) [circle, draw] {$c$};
\node (d) at (4,0) [circle, draw] {$d$};

\path[-triangle 45, bend left] (a) edge  (b);
\path[-triangle 45, bend left] (b) edge  (a);
\path[-triangle 45] (a) edge  (c);
\path[-triangle 45] (b) edge  (c);
\path[-triangle 45] (d) edge  (c);
\end{tikzpicture}
\caption{Argumentation framework from Example~\ref{ex:Ccbs}}
\label{Tikz:exampleCcbs}
\end{figure}
\end{example}

%A problem with our translation is that it is not \emph{Z-adequate} \cite{Heyninck2020BetweenADF}, therefore it would be better to use another translation. We call a translation \emph{Z-adequate} with respect to a semantics\footnote{Semantics are a specific set of properties a set of arguments satisfies in an argumentation framework. For an overview see \cite{Baroni:2018}.} if every argument $a$ of any \AF, that is in a set, which satisfies this semantics, then $a$ should also be satisfied in $\Omega(\AF)$. Informally speaking a translation should keep some properties of set of arguments intact. So using a not \emph{Z-adequate} translation could yield to sets, which are missing some desired properties. 
%%\todoKS{explain what that means and why this is a problem} 
%
%But our ranking using this translation should still produce meaningful results. Like in Example \ref{Tikz:exampleCcbs} we rank arguments lower than their attackers and arguments, which are indistinguishable, are ranked on the same level. So it is not wrong to use this intuitive and easy to calculate translation to generate a first idea for a ranking. 
%%\todoKS{why?}
For some further ideas of other translations we recommend \cite{Heyninck2020BetweenADF}. Instead of System Z we could also use c-representations \cite{kern2001conditionals}.

Ranking-based semantics are usually evaluated wrt.\ a series of rationality postulates \cite{amg-ben:c:ranking-based-semantics,del:dis:ranking}. In this work, we provide some first steps in this direction and look at four simple ones, namely \emph{Abstraction} \cite{amg-ben:c:ranking-based-semantics}, \emph{Independence} \cite{mat-ton:c:game-theoretic-argument-strength}, \emph{Void Precedence} \cite{mat-ton:c:game-theoretic-argument-strength, amg-ben:c:ranking-based-semantics} and \emph{Self-Contradiction} \cite{mat-ton:c:game-theoretic-argument-strength}. 
With the property of \emph{Abstraction} we can ensure, that a ranking over arguments only depends on the attacks between arguments and not on the identity of the arguments.

\begin{definition}
An isomorphism $\gamma$ between two argumentation framework \safeq\ and $AF' = \langle \cala', \calr' \rangle$ is a bijective function $\gamma: \cala \rightarrow \cala'$ such that $\forall x,y \in \cala, (x,y) \in \calr$ if and only if $(\gamma(x),\gamma(y)) \in \calr'$. With a slight abuse of notation, we will note $AF' = \gamma(AF)$.
\end{definition}

\begin{definition}[Abstraction]
A ranking-based semantics $\omega$ satisfies \emph{Abstraction} iff for any \emph{AF, AF'}, for every isomorphism $\gamma$ such that $AF' = \gamma(AF)$, we have $x \succ^{\sigma}_{AF}$ iff $\gamma(x) \succ^{\sigma}_{AF}  \gamma(y)$.
\end{definition}

It is natural, that arguments from two different disconnected subgraphs should have no influence on each other for a ranking. A ranking, which satisfies \emph{Independence}, ensures this idea.

\begin{definition}
The connected components of an argumentation framework $AF$ are the set of largest subgraphs of $AF$, denoted by $cc(AF)$, where two arguments are in the same component of $AF$ iff there exists some path between them.
\end{definition}

\begin{definition}[Independence]
A ranking-based semantics $\omega$ satisfies \emph{Independence} iff for any argumentation framework $AF$ such that $\forall AF' \in cc(AF)$, $\forall x,y \in Arg(AF')$, $x \succ^{\sigma}_{AF'} y$ iff $x \succ^{\sigma}_{AF} y$.
\end{definition}

\begin{proposition}
\emph{Ccbs} satisfies \emph{Abstraction} and \emph{Independence}.
\end{proposition}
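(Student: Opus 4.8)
The plan is to exploit the fact that every ingredient of \emph{Ccbs}---the translation $\theta$, the System Z ranking function $\kappa^Z$, and the counting function $Ccs$---is defined purely in terms of the attack structure, so both properties reduce to structural bookkeeping. I would first record a preliminary observation that drives both arguments: a world lies in $(\kappa^Z_\Delta)^{-1}(0)$ if and only if it falsifies no conditional of $\Delta$. Indeed, by definition $\kappa^Z_\Delta(w)=\max\{Z(\delta)\mid\delta(w)=0\}+1$ with $\max\emptyset=-1$, so $\kappa^Z_\Delta(w)=0$ holds exactly when $\{\delta\mid\delta(w)=0\}$ is empty. This characterization is independent of the actual Z-ranks and makes the level-$0$ set easy to handle.

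For \emph{Abstraction}, I would show that an isomorphism $\gamma:\cala\to\cala'$ lifts to a bijection $\hat\gamma:\Omega(\cala)\to\Omega(\cala')$ given by $\hat\gamma(w)(\gamma(a))=w(a)$. First I would check that $\gamma$ maps $\theta(AF)$ onto $\theta(AF')$ conditional by conditional: since $(b,a)\in\calr$ iff $(\gamma(b),\gamma(a))\in\calr'$, the attackers of $\gamma(a)$ are exactly the $\gamma$-images of the attackers of $a$, so the conditional attached to $\gamma(a)$ is the literal renaming of the one attached to $a$. Next, $\hat\gamma$ preserves satisfaction ($w\vdash\phi$ iff $\hat\gamma(w)\vdash\gamma(\phi)$), hence verification and falsification of conditionals, so by the preliminary observation $\hat\gamma$ restricts to a bijection $(\kappa^Z_{\theta(AF)})^{-1}(0)\to(\kappa^Z_{\theta(AF')})^{-1}(0)$. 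Since $w\vdash a$ iff $\hat\gamma(w)\vdash\gamma(a)$, counting gives $Ccs^{\theta}_{AF}(a)=Ccs^{\theta}_{AF'}(\gamma(a))$ for every $a$, from which $x\succ^{Ccbs}_{AF}y$ iff $\gamma(x)\succ^{Ccbs}_{AF'}\gamma(y)$ follows at once.

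For \emph{Independence}, let $AF_1,\dots,AF_k$ be the connected components of $AF$, so that arguments and attacks partition and, crucially, no attack runs between distinct components. Consequently $\theta(AF)=\biguplus_i\theta(AF_i)$, and every conditional in $\theta(AF_i)$ mentions only atoms from $\cala_i$, because all attackers of an argument stay inside its component. Writing each world as a tuple $w=(w_1,\dots,w_k)$ over the blocks $\cala_i$, a conditional of $\theta(AF_i)$ is falsified by $w$ iff it is falsified by $w_i$; combined with the preliminary observation this yields the product decomposition $(\kappa^Z_{\theta(AF)})^{-1}(0)=\prod_i(\kappa^Z_{\theta(AF_i)})^{-1}(0)$. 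For $a\in\cala_i$, satisfaction $w\vdash a$ depends only on $w_i$, so summing over the free choices in the remaining blocks gives $Ccs^{\theta}_{AF}(a)=\bigl(\prod_{j\neq i}N_j\bigr)\cdot Ccs^{\theta}_{AF_i}(a)$, where $N_j=|(\kappa^Z_{\theta(AF_j)})^{-1}(0)|$. As this factor does not depend on the particular argument of component $i$, it cancels in any strict comparison, giving $x\succ^{Ccbs}_{AF}y$ iff $x\succ^{Ccbs}_{AF_i}y$, which is precisely Independence.

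The main obstacle I anticipate is justifying the exactness of the product decomposition and, above all, that each $N_j$ is strictly positive, since this positivity is what lets the common factor cancel. The decomposition itself is robust, resting only on the preliminary observation and the disjointness of the atom blocks. For positivity I would argue that consistency of $\theta(AF)$---already presupposed for $\kappa^Z_{\theta(AF)}$ to be defined---descends to each $\theta(AF_j)$ because the blocks share no atoms; consistency of $\theta(AF_j)$ then makes its first System Z layer nonempty, and any conditional tolerated there is verified by a world falsifying nothing, i.e. a level-$0$ world, so $N_j\geq 1$. I would flag that this is exactly where self-attacks cause trouble: a lone self-attacker yields the unsatisfiable-premise conditional $(a\mid\neg a)$ and hence an inconsistent knowledge base, so the statement is to be read for frameworks whose translation is consistent.
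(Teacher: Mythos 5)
Your proof is correct, and for \emph{Abstraction} it is simply a rigorous rendering of the same idea the paper gestures at (renaming arguments renames conditionals and worlds in lockstep, so the level-$0$ set and the counts transport along the induced bijection). For \emph{Independence}, however, your route is genuinely different from the paper's. The paper argues by perturbation: adding an independent argument $c$ ``changes the $Ccs$-score of $a$ and $b$ by at most two, but uniformly.'' That additive claim is not right in general --- adjoining a disconnected component rescales every count \emph{multiplicatively} by the number $N_j$ of level-$0$ worlds of that component (e.g.\ a fresh $2$-cycle multiplies all counts by $3$) --- and it also only treats single added arguments rather than the full component decomposition that the definition of \emph{Independence} quantifies over. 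Your argument fixes both defects: the observation that $w$ has rank $0$ iff it falsifies no conditional, plus the fact that $\theta$ keeps each conditional inside its component's atom block, gives the exact product decomposition of the level-$0$ set and hence $Ccs^{\theta}_{AF}(a)=\bigl(\prod_{j\neq i}N_j\bigr)\cdot Ccs^{\theta}_{AF_i}(a)$, from which preservation of strict and non-strict comparisons follows once each $N_j$ is positive. Your worry about positivity is the one point where you are more cautious than necessary for this particular translation: the all-true world never falsifies any conditional of the form $(a\mid\bigwedge_{(b,a)\in\calr}\neg b)$ (each premise is either $\top$ with the conclusion satisfied, or unsatisfied by the all-true world), so $N_j\geq 1$ holds unconditionally and no consistency hypothesis needs to be added --- though flagging that the Z-partition itself may fail to exist in the presence of self-attackers is a fair criticism of the paper's setup. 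In short, your proof buys a sound and complete argument where the paper offers only an (in one place incorrect) sketch.
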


\begin{proof}
For \emph{Abstraction} we can see, that using an isomorphism does not change the structure of an argumentation framework nor does it change relationships between arguments. \emph{Ccbs} does not change if an argument $a$ is renamed to $c$ as long as incoming and outgoing attacks are still the same. 
 
Adding independent arguments to an argumentation framework does not change the ranking between two arguments. Given an argumentation framework $AF$ with two arguments $a,b$ and $a \succ^{ccbs}_{AF} b$. If we add an argument $c$ to this $AF$ to create $AF'$, at most we would change the $Ccs$-score of $a$ and $b$ by two, but this change takes place for both. So if $Ccs_{AF}^{\theta}(a) \geq Ccs_{AF}^{\theta}(b)$, then it holds that $Ccs_{AF'}^{\theta}(a) \geq Ccs_{AF'}^{\theta}(b)$. 
\end{proof}

The idea of \emph{Void Precedence} states that a non-attacked argument should be strictly more acceptable than an attacked argument. 
\begin{definition}[Void Precedence]
A ranking-based semantics $\sigma$ satisfies \emph{Void Precedence} if and only if for any \safeq\ and $\forall a,b \in \cala\ $, if $\forall c \in \cala\ $ $(c,a) \notin \calr\ $ and $\exists d \in \cala\ $ with $(d,b)\in \calr\ $, then $a \succ^{\sigma}_{AF} b$.
\end{definition}

On the contrary, a self-attacking argument should always be ranked worse than any other argument, because these arguments are contradicting themselves. This is handled with the property \emph{Self-Contradiction}.
\begin{definition}[Self-Contradiction]
A ranking-based semantics $\sigma$ satisfies \emph{Self-Contradiction} if and only if for any \safeq\ and $\forall a,b \cala$, if $(a,a) \notin \calr$ and $(b,b) \in \calr$ then $a \succ^{\sigma}_{AF} b$.
\end{definition}

\begin{proposition}
\emph{Ccbs} does not satisfy \emph{Void Precedence} nor \emph{Self-Contradiction}.
\end{proposition}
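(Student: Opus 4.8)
The plan is to refute both properties with a single explicit counterexample, built around a self-attacking argument. Before constructing it, I would first record why the counterexample is \emph{forced} to contain a self-loop. For any \safeq\ whose induced base $\theta(AF)$ is consistent, a world has rank $0$ iff it falsifies no conditional. Since an unattacked argument $a$ contributes the conditional $(a\mid\top)$, which is falsified exactly by the worlds assigning $a$ false, every rank-$0$ world satisfies $a$; hence an unattacked argument attains the maximal value $Ccs_{AF}^{\theta}(a)=|(\kappa^Z_{\theta(AF)})^{-1}(0)|$. Conversely, if $b$ is attacked by some $y\neq b$, the world that sets $b$ to false, $y$ to true, and every remaining argument to true falsifies nothing (the conditional for $b$ becomes non-applicable because one of its attackers is true, and every other conditional has a true conclusion), so it is a rank-$0$ world with $b$ false. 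Thus a non-self-attacked argument has strictly smaller count than an unattacked one, which already shows that \emph{Ccbs} respects \emph{Void Precedence} on every self-loop-free framework. Any counterexample must therefore use a self-attack.

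I would then take $AF=\langle\{a,b,c\},\{(b,b),(a,c)\}\rangle$, so that $\theta(AF)=\{(a\mid\top),(b\mid\neg b),(c\mid\neg a)\}$. The decisive observation is that the self-attacker yields the conditional $(b\mid\neg b)$, whose antecedent contradicts its conclusion: as a constraint on rank-$0$ worlds it reads $\neg b\Rightarrow b$, i.e.\ it forces $b$ to be true. Computing the worlds that falsify nothing gives $(\kappa^Z_{\theta(AF)})^{-1}(0)=\{abc,ab\bar c\}$, whence $Ccs_{AF}^{\theta}(a)=Ccs_{AF}^{\theta}(b)=2$ and $Ccs_{AF}^{\theta}(c)=1$, so that $a\simeq^{Ccbs}b\succ^{Ccbs}c$. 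Since a self-attacked argument is attacked (taking $d=b$ in the definition), $a$ is unattacked while $b$ is attacked, yet $a\not\succ^{Ccbs}_{AF}b$; this refutes \emph{Void Precedence}. Likewise $a$ is not self-attacking while $b$ is, yet $a\not\succ^{Ccbs}_{AF}b$ (indeed the self-attacker $b$ is even ranked strictly above the ordinarily attacked $c$), which refutes \emph{Self-Contradiction}.

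The main obstacle is that the very feature making the counterexample work, the self-contradictory conditional $(b\mid\neg b)$, also makes $\theta(AF)$ inconsistent in the sense of System Z: no world verifies $(b\mid\neg b)$, so it is never tolerated, the Z-partition does not terminate, and $\kappa^Z_{\theta(AF)}$ is, strictly, undefined. I would address this head-on. One option is to argue that System Z simply returns no ranking function on such a framework, so that $\succeq^{Ccbs}_{AF}$ is undefined and the universally quantified properties fail on $AF$ for that reason alone. The cleaner option, and the one I would develop, is to note that the rank-$0$ stratum is characterised independently of the full partition: $w$ has rank $0$ iff it falsifies no conditional, i.e.\ iff $w$ models $\bigwedge_{x\in\cala}(B_x\Rightarrow x)$, a formula that is always satisfiable (the all-true world is a model) and yields exactly $\{abc,ab\bar c\}$ here. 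The delicate step is thus to justify that $Ccs$ depends only on this level-$0$ set and that the set remains well defined even though the base is System-Z-inconsistent; once this is pinned down, the counts, and hence the two refutations, are immediate.
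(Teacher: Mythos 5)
Your proof is correct and rests on the same core idea as the paper's: exhibit a framework with a self-attacking argument, compute the level-$0$ worlds, and observe that the self-attacker ties with an unattacked, non-self-attacking argument, refuting both properties at once. The paper does this with the smaller framework $\cala=\{a,b\}$, $\calr=\{(a,a)\}$, which gives $\Delta=\{(a|\neg a),(b|\top)\}$, $(\kappa^Z_{\Delta})^{-1}(0)=\{ab\}$ and hence $a\simeq^{Ccbs}_{AF}b$; your three-argument example works identically, the extra argument $c$ being unnecessary. What you add is nonetheless worthwhile. Your preliminary observation that any counterexample to \emph{Void Precedence} must contain a self-loop is a correct positive result (Ccbs satisfies \emph{Void Precedence} on self-loop-free frameworks) that the paper does not state, and it explains why the counterexample has the shape it does. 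More importantly, you are right that the conditional $(b|\neg b)$ is verified by no world and hence never tolerated, so the Z-partition of $\theta(AF)$ does not exist under the paper's own definition; the paper's assertion that $\Delta=\Delta_0$ in its proof is strictly incorrect, and it implicitly relies on exactly the repair you spell out, namely that the level-$0$ stratum is the set of worlds falsifying no conditional and is well defined independently of the partition. Your second option is the right one to commit to; stating it as a one-line lemma (a world has $\kappa^Z_{\Delta}$-rank $0$ iff it falsifies no conditional of $\Delta$, by the convention $\max\emptyset=-1$) would close the only delicate step and make your proof strictly more careful than the published one.
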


\begin{proof}
To prove this we look at the following example.
Let $\safeq$ with $\cala = \{a,b\}$ and $\calr =\{(a,a)\}$ be an argumentation framework. Using Equation \ref{func:theta} we obtain $\Delta = \{(a|\neg a), (b|\top)\}$. With $\Delta = \Delta_0$ we have $(\kappa^Z_{\Delta})^{-1}(0) = \{ab\}$ and $Ccs_{\kappa^Z_{\Delta}(\omega)}^{\theta}(a)= 1$, $Ccs_{\kappa^Z_{\Delta}(\omega)}^{\theta}(b)= 1$. This results in $a \simeq^{Ccbs}_{AF} b$. Therefore $a$ is not strictly less acceptable then $b$. So \emph{Ccbs} does not satisfy Void Precedence nor Self-Contradiction.
\end{proof}

%Hence this semantics has a few shortcomings, but an extension, which solves the problem of self-attacking arguments, should yield a reasonable ranking-based semantic. A full analysis for such a semantics will be presented in a follow-up work.
Hence this semantics has a few shortcomings, we propose an extension. Before we count the occurrences we rank every argument with an selfattack at the lowest possible position. 
\begin{definition}
\emph{Ccbs'} associates to any argumentation framework \safeq\ a ranking $\succeq_{AF}^{Ccbs'}$ on $\mathcal{A}$ such that $\forall a,b \in \mathcal{A}$ with respect to a translation $\theta$ and a ranking function $\kappa^Z_{\Delta}(\omega)$.
\begin{align*}
if~ (a,a) \notin \calr ~ and ~(b,b) \in \calr ~ then ~a \succ^{\sigma}_{AF} b \\
otherwise~ a \succeq_{AF}^{Ccbs'} b~ \text{if} ~Ccs_{AF}^{\theta}(a) \geq Ccs_{AF}^{\theta}(b)
\end{align*}
\end{definition}
When we evaluate this ranking semantics it is easy to see that \emph{Ccbs'} satisfies \emph{Self-Contradiction} (we omit the proof). 
\begin{proposition}
\emph{Ccbs'} satisfies \emph{Void Precedence}.
\end{proposition}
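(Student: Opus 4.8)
The plan is to reduce \emph{Void Precedence} for \emph{Ccbs'} to a counting statement about the rank-$0$ worlds of $\kappa^Z_{\theta(AF)}$, after first peeling off the self-attacking case. Fix $a,b\in\cala$ with $a$ unattacked (so in particular $(a,a)\notin\calr$) and $b$ attacked, say $(d,b)\in\calr$ for some $d$. If $(b,b)\in\calr$, then the first clause in the definition of \emph{Ccbs'} applies immediately and yields $a\succ_{AF}^{Ccbs'}b$, so nothing further is needed. The real work is the case $(b,b)\notin\calr$, where the attacker $d$ can be chosen with $d\neq b$ and where the strictness must be produced by the counting rule: I would show $Ccs_{AF}^{\theta}(a)>Ccs_{AF}^{\theta}(b)$, which via the ``otherwise'' clause gives $a\succeq_{AF}^{Ccbs'}b$ together with $\neg(b\succeq_{AF}^{Ccbs'}a)$, i.e.\ $a\succ_{AF}^{Ccbs'}b$.

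First I would record a characterization of the most plausible worlds. By the definition of $\kappa^Z_\Delta$ with $\max\emptyset=-1$, a world $w$ has $\kappa^Z_{\theta(AF)}(w)=0$ exactly when $w$ falsifies no conditional of $\theta(AF)$. Since the conditional contributed by an argument $x$ is $(x\mid B_x)$ with $B_x=\bigwedge_{(y,x)\in\calr}\neg y$, the statement ``$w$ does not falsify $(x\mid B_x)$'' amounts to: $w\vdash x$, or $w$ makes some attacker of $x$ true. For the unattacked argument $a$ we have $B_a=\top$, so every rank-$0$ world must satisfy $a$; hence $Ccs_{AF}^{\theta}(a)$ equals the total number $N$ of rank-$0$ worlds.

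It then remains to exhibit a single rank-$0$ world in which $b$ is false, which forces $Ccs_{AF}^{\theta}(b)\le N-1<Ccs_{AF}^{\theta}(a)$. The candidate is the world $w^{\ast}$ that sets every argument to \true\ except $b$, which it sets to \false. Every conditional $(x\mid B_x)$ with $x\neq b$ is satisfied because $w^{\ast}\vdash x$; and the conditional $(b\mid B_b)$ is not falsified because the conjunct $\neg d$ of $B_b$ is false under $w^{\ast}$ (as $d\neq b$ is set true), so the antecedent $B_b$ fails and the conditional is merely inapplicable. Thus $w^{\ast}$ falsifies nothing, so $\kappa^Z_{\theta(AF)}(w^{\ast})=0$, while $w^{\ast}\vdash\neg b$, as required; in particular $w^{\ast}$ witnesses $N\geq 1$.

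I expect the main obstacle to be exactly the strictness, and specifically the observation that the pure counting argument breaks down when $b$'s only attacker is itself: in that situation one cannot set $b$ false while keeping an attacker true, so $w^{\ast}$ does not exist. This is precisely why the case split on $(b,b)\in\calr$ is unavoidable and why the dedicated self-attack clause built into \emph{Ccbs'} carries the essential load there; once that case is delegated to the clause, the remaining counting argument is routine. A secondary point I would verify carefully is that flipping $b$ to \false\ cannot create a newly falsified conditional elsewhere, but this holds because all other arguments are kept true and a conditional $(x\mid B_x)$ can be falsified only when its conclusion $x$ is false.
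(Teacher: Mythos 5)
Your proof is correct, and in fact the paper offers no proof of this proposition at all (it is stated without argument), so there is nothing to compare against except to say that your write-up supplies the missing justification. The two key points are both handled properly: (i) every rank-$0$ world of $\kappa^Z_{\theta(AF)}$ must satisfy the unattacked argument $a$, since its conditional is $(a\mid\top)$ and a world with $a$ false falsifies it, so $Ccs_{AF}^{\theta}(a)$ equals the total number of rank-$0$ worlds; and (ii) when $b$ is attacked by some $d\neq b$, the world setting everything true except $b$ falsifies no conditional, so it is a rank-$0$ world omitting $b$, forcing the strict inequality. Your diagnosis of why the case split is unavoidable is also exactly right: when $b$'s only attacker is $b$ itself this witness world does not exist (this is precisely the counterexample the paper uses to show that plain \emph{Ccbs} fails Void Precedence), and the dedicated self-attack clause of \emph{Ccbs'} is what rescues that case. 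One minor point worth being aware of, though it does not affect your argument: for frameworks with self-attackers the Z-partition as literally defined may not exist (a conditional $(c\mid\neg c)$ is never tolerated), but since you only use the characterization of rank-$0$ worlds as those falsifying no conditional --- which is how the paper itself operates in its examples --- your reasoning stands under the paper's own conventions.
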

%\begin{proof}
%Let $a$ be a non-attacked argument, then $Ccs_{\kappa^Z_{\Delta}(\omega)}^{\theta}(a) = |(\kappa^Z_{\Delta})^{-1}(0)|$. $|(\kappa^Z_{\Delta})^{-1}(0)|$ is the maximal score an argument can get. 
%
%Assume $Ccs_{\kappa^Z_{\Delta}(\omega)}^{\theta}(b) = |(\kappa^Z_{\Delta})^{-1}(0)|$ and $b$ is attacked by an non-empty set of arguments $C$. So the conditional knowledge base of b is $(b|C)$ with $ C = \neg c_1 \land \neg c_2 \land \dots  \land \neg c_n,  ~ where~ c_i \in C$. There exists a world $w$ where $b$ and every $c_i$ is \textsc{true}. So the interpretation of this world is $u$. Hence for $w$ to be in $(\kappa^Z_{\Delta})^{-1}(0)$ the knowledge base of $b$ is not relevant. Therefore there has to be a world $w'$ in $(\kappa^Z_{\Delta})^{-1}(0)$ with these properties. This world $w'$ does not rise the \emph{Ccs} score of b, the maximal score of $b$ is $|(\kappa^Z_{\Delta})^{-1}(0)|-1$.
%\end{proof}

So with this small extension we now satisfy an additional two very intuitive properties for ranking-based semantics. We leave an investigation of further properties for future work.
%\todoKS{Proof for QP and CT}
%\todoKS{The preliminary analysis should end with something more positive}
%===========================================================================================================================
\section{Conclusion}
%===========================================================================================================================
In this work we have presented a first idea to rank arguments with conditional logics. For this we first looked at a simple translation 
from an argumentation framework to conditional logic and applied an inference relation. Using a simple counting idea results in a ranking over arguments. 

Although this semantics does not satisfy two desired properties, with a small extension we have shown that these two properties are satisfied. Also we have established a simple connection between ranking arguments and conditional logic. In the future we can improve this idea and hopefully present a ranking-based semantics, which satisfies a good number of properties presented in \cite{del:dis:ranking}.

Another future work approach is to look at other frameworks like ADFs presented in \cite{bre-ell-str-wal-wol:c:adf-revisited}, which uses an acceptance function for every argument. This could prove to be helpful in finding a ranking with conditional logic. 

\cite{kern2011default} used a similar idea to rank arguments from a \emph{Defeasible Logic Programming} (DeLP), a system, which combines logics programming with defeasible argumentation. They used System Z to identify ``good'' arguments. 

\section*{Acknowledgements}
The research reported here was supported by the Deutsche Forschungsgemeinschaft under grant KE~1413/11-1.

\bibliography{references}
\end{document}